\documentclass{article}

\usepackage[preprint]{neurips_2026}

\usepackage[utf8]{inputenc}
\usepackage[T1]{fontenc}
\usepackage{hyperref}
\usepackage{url}
\usepackage{booktabs}
\usepackage{amsfonts}
\usepackage{nicefrac}
\usepackage{microtype}
\usepackage{xcolor}
\usepackage{graphicx}
\usepackage{subcaption}
\usepackage{multirow}
\usepackage{tikz}
\usetikzlibrary{arrows.meta}
\usepackage{amsmath}
\usepackage{amssymb}
\usepackage{mathtools}
\usepackage{amsthm}
\usepackage{algorithm}
\usepackage{algpseudocode}

\theoremstyle{plain}
\newtheorem{proposition}{Proposition}

\theoremstyle{remark}
\newtheorem{remark}{Remark}

\title{\Large Tippett-minimum Fusion of Representation-space Diffusion Models
  for Multi-Encoder Out-of-Distribution Detection}

\author{%
  Neelkamal Bhuyan \\
  Georgia Institute of Technology \\
  \texttt{nbhuyan3@gatech.edu}
}

\begin{document}

\maketitle

\begin{abstract}
We address out-of-distribution (OOD) detection across the full spectrum of
distribution shifts---global domain changes, semantic divergence, texture differences,
and covariate corruptions---through a multi-encoder fusion of per-encoder representation-space
diffusion models (RDMs).
We statistically identify each encoder's sensitivity to specific shift types from ID
data alone and introduce EncMin2L---an encoder-agnostic two-level $\min(\cdot)$-gate that
combines and calibrates per-encoder diffusion-based likelihood detectors without OOD
labels, outperforming monolithic multi-encoder baselines at $2.3\!\times$ lower
parameter cost.
Two ID-data diagnostics: $\eta^2$ (class-conditional F-test) and $\Delta\mu$
(log-likelihood shift under synthetic corruptions)---quantify encoder specialization,
while a Tippett minimum $p$-value combination aggregates per-encoder scores into a
single, calibration-stable OOD signal.
EncMin2L achieves $\geq\!0.94$ AUROC across all four shift types simultaneously,
outperforming the state-of-the-art representation-space diffusion OOD detectors across overlapping benchmarks.
\end{abstract}

% ============================================================
\section{Introduction}
\label{sec:intro}
% ============================================================

Out-of-distribution (OOD) detection asks whether a given image was generated by the
same process as the true data, often known as the in-distribution (ID) data.
A standard approach fits a density model to ID training data
and flags a test point as OOD when its log-likelihood $\log p_\theta(z)$
falls below a threshold calibrated on held-out ID samples~\citep{xiao2020likelihood}.
The challenge lies in building density models expressive enough to capture the full
structure of the ID distribution yet sensitive to the specific forms of shift that
arise in practice.

Deep generative models have become the dominant framework for learning complex data
distributions, with diffusion models demonstrating remarkable fidelity across image
generation, density estimation, and downstream tasks including anomaly detection and
domain adaptation~\cite{ho2020denoising, livernoche2024diffusion}.
Score-matching stochastic differential equations
(SDEs)~\citep{song2021score} provide a principled framework: a neural network
learns the score function $\nabla_z \log p(z)$ via a denoising objective, and the
associated probability flow ODE (PF-ODE) enables numerically accurate computation
of $\log p_\theta(z)$ for any test point~\cite{song2021maximum}.
This makes score-based diffusion models natural candidates for likelihood-based OOD
detection.

Operating diffusion models directly in pixel space is computationally expensive,
and pixel-space density estimates are notoriously unreliable for
OOD~\citep{xiao2020likelihood, graham2021denoising}.
Representation-space diffusion models (RDMs) address this by operating in the compressed
representation space of a pretrained encoder---capturing underlying semantic
structure by removing pixel-level variation~\cite{rombach2022high}.
Modern self-supervised encoders are each capable of meaningfully compressing
$224\!\times\!224$ images into dense semantic vectors: CLIP~\citep{radford2021clip}
encodes global visual-linguistic alignment; DINOv2~\citep{oquab2023dinov2} captures
fine-grained patch-level semantics; and ResNet-50~\citep{he2016resnet} penultimate
features encode texture and low-level image statistics.

This compression is not lossless---each encoder emphasises a different aspect of
image structure and is therefore sensitive to a different class of distribution shift.
In \S\ref{sec:analysis} we quantify this directly: CLIP detects global domain
shift but is nearly blind to covariate corruptions; DINOv2 separates fine-grained
semantic classes but its features are largely invariant to image corruptions;
ResNet-50 is the strongest covariate-shift detector but ranks last on semantic
benchmarks.
This raises a fundamental question:
\begin{quote}
  \textit{Given a library of pretrained encoders, which
subset should be deployed, and how should their signals be combined, when the type
of distribution shift is unknown at test time?}
\end{quote}

\noindent \textbf{Main Contributions}: We make two contributions that together answer this question.
\textbf{(i)~Quantifying encoder sensitivity from ID data.}
We quantify each encoder's OOD sensitivity through ID-data diagnostics ($\eta^2$,
$\Delta\mu$, and cross-encoder Spearman $\rho$) and characterise how representation
space geometry governs sensitivity to covariate versus semantic distribution shifts.
These diagnostics, computed entirely from ID training data, reveal strict encoder
specialisation: no encoder dominates across all shift types simultaneously, and any
fixed weighting of encoder contributions will underperform on at least one shift type.
\textbf{(ii)~Two-level $\min(\cdot)$-gate with statistical calibration.}
We combine per-encoder VP-SDE density models through a two-level $\min(\cdot)$-gate grounded
in Tippett's minimum $p$-value test, resulting in our \textbf{EncMin2L} framework.
At each level, raw log-likelihood scores are first mapped through the ID empirical
CDF to yield calibrated $p$-values on $[0,1]$.
For every encoder, a $\min(\cdot)$ operation across $n$ different input-normalization forks collapses $n$ complementary
views of each encoder's representation space into a single per-encoder score
(Level~1), followed by an explicit CDF-based re-scaling that corrects for the
non-uniform Beta$(1,n)$ null distribution of the resulting minimization operation.
A second Tippett minimization across all encoders' Level~1 scores, with a joint threshold $\tau$
calibrated on held-out ID data, constitutes the final OOD decision (Level~2).
The result is a statistically valid multiple testing procedure in which the
input sample itself forces the most-alarmed representation-space diffusion OOD detector to make a decision---adapting to the
input without requiring any knowledge of the distribution shift at test time.

We evaluate on four OOD benchmarks spanning global domain shift (SVHN),
fine-grained semantic shift (CIFAR-100), texture shift (DTD), and covariate
shift (CIFAR-10C).
EncMin2L achieves $\geq$0.94 AUROC across all four benchmarks, outperforming
monolithic multi-encoder baselines at 2.3$\times$ lower
parameter cost.
Leave-one-out ablations confirm that each encoder specialises on the shift type
predicted by its ID diagnostic profile, providing post-hoc validation of the
framework.

% ============================================================
\section{Related Work}
\label{sec:related}
% ============================================================

\paragraph{Density-based OOD detection.}
Using a generative model's log-likelihood as an OOD score is a longstanding approach. \citet{grathwohl2020jem} reformulate a classifier as an energy-based model;
\citet{xiao2020likelihood} propose likelihood regret for VAEs to correct for
the pathological behaviour of deep generative model likelihoods on OOD inputs.
\citet{graham2021denoising} apply diffusion models directly in pixel space via
multi-step reconstruction error; \citet{liu2023lmd} score checkerboard-masked
inpainting with LPIPS (LMD); \citet{heng2024diffpath} extract a compact
6-dimensional descriptor from the denoising trajectory of a pixel-space DM
(DiffPath), demonstrating zero-shot cross-distribution generalization but at the
cost of coarse OOD resolution.
Our work instead trains density models in the compressed representation space
of pretrained encoders, where the density is better-behaved and
computationally tractable.
\citet{jarve2025vdm} independently pursue this direction: their Variational
Diffusion Model (VDM) trains a learnable-schedule representation-space diffusion model on a
single ResNet-18 encoder and achieves strong AUROC on CIFAR-10 ID benchmarks.
Concurrently, \citet{ding2025rdm} train a sub-VP SDE score model on a single
pretrained encoder's representation space at ImageNet scale, with a
class-conditioned variant (ConRDM) that matches supervised detectors when labels
are available.
Score-based SDE models~\citep{song2021score} provide numerically accurate
log-likelihoods via the probability flow ODE.

\paragraph{Feature-space methods.}
A complementary family of methods detects OOD samples by their position in the
feature space of a pretrained or jointly trained network, without fitting an
explicit density.
\citet{lee2018mahalanobis} use the Mahalanobis distance to class-conditional
Gaussian fits in feature space.
\citet{liu2020energy} exploit the energy function implicit in classifier logits.
\citet{sun2022knn} show that $k$-NN distance in the penultimate feature space
outperforms many parametric alternatives.
These methods depend on a single representation space; combining multiple encoders
requires an explicit fusion strategy.

\paragraph{Multi-encoder and ensemble detection.}
Ensemble and multi-view approaches to OOD detection are relatively unexplored.
\citet{shalev2018ood} show, in a text setting, that replacing a single label embedding
with multiple dense semantic representations from different corpora or architectures
improves OOD discrimination---an early demonstration that representation diversity
across sources provides complementary detection signal.
Prior works that combine multiple representation spaces in vision typically do so by
concatenation or majority voting, without a theoretical account
of when combination helps and when it is redundant.

\paragraph{Multiple hypothesis testing.}
The minimum $p$-value statistic for combining independent tests is classical:
Tippett's test rejects the global null when $\min_k p_k < \tau$, with
$\tau$ set to control the family-wise error rate~\citep{westfall1993resampling}.
Fisher's method~\citep{westfall1993resampling} provides an alternative via
$-2\sum_k\log p_k \sim \chi^2_{2K}$, which is more powerful when multiple tests
simultaneously reject; Tippett's test (and our $\min(\cdot)$-gate) is more powerful when
a single test dominates.
Given the encoder specialization we observe---each encoder covers a distinct
shift type---Tippett's approach is the natural choice: one encoder typically
dominates for any given OOD shift type.

% ============================================================
\section{Preliminaries}
\label{sec:method}
% ============================================================

\subsection{Representation Space and Notation}
\label{sec:notation}

Let $\{E_1,\ldots,E_K\}$ be a fixed set of pretrained image encoders,
$E_k:\mathcal{X}\to\mathbb{R}^{d_k}$.
We use $K\!=\!3$: CLIP ViT-B/32 ($d_1\!=\!512$), DINOv2 ViT-B/14 ($d_2\!=\!768$),
and ResNet-50 penultimate layer ($d_3\!=\!2048$).
For each encoder we define $n=2$ \emph{normalization forks}:
\begin{align*}
  z_{k,\mathrm{n}} &= (E_k(x) - \mu_k)/\sigma_k, &
  z_{k,\mathrm{u}} &= E_k(x),
\end{align*}
where $\mu_k, \sigma_k$ are per-dimension statistics estimated on the ID training
set only.
Normalization amplifies fine-grained structure within each encoder's distribution;
the unnormed fork retains the raw geometry.

\begin{remark}
  \textit{Within each encoder, the normed fork is more sensitive to covariate shifts and
the unnormed fork holds a marginal edge on semantic OOD---a consequence of
per-dimension Z-score normalization expanding the dynamic range of score
differences under perturbations.}
\end{remark}

\subsection{Score Network Architecture and Per-Encoder Density Models}
\label{sec:tests}

\paragraph{Per-encoder models.}
For each fork $(k, f)$ we train an independent VP-SDE score
network~\citep{song2021score} with linear noise schedule
$\beta(t) = \beta_\mathrm{min} + t(\beta_\mathrm{max}-\beta_\mathrm{min})$,
$\beta_\mathrm{min}=0.1$, $\beta_\mathrm{max}=20$.
The score network is an MLP with hidden dimension $H_k=2d_k$ and depth
$L=6$ residual layers---capacity proportional to the encoder's intrinsic
complexity.
All models are trained with Adam~\citep[lr=$2\!\times\!10^{-4}$, up to 500
epochs with early stopping]{song2021score}, batch size 512, gradient clipping
(max norm 1.0).
The probability flow ODE (PF-ODE) corresponding to the VP-SDE provides
an exact log-likelihood via the instantaneous change-of-variables formula
and Hutchinson trace estimation with $M$ random probes:
\begin{equation}
  \ell_{k,f}(z) \triangleq \log p_\theta^{(k,f)}(z)
    = \log p_T(\tilde{z}) - \int_0^T \nabla \cdot f_\theta(\tilde{z}(t),t)\,dt,
  \label{eq:ll}
\end{equation}
where $f_\theta$ is the PF-ODE drift and $p_T$ is the terminal Gaussian prior.
We use $M\!=\!10$ Rademacher probe vectors and an adaptive ODE solver with
relative and absolute tolerances of $10^{-5}$.
Let $\hat{F}_{k,f}$ be the empirical CDF of $\{\ell_{k,f}(z_i)\}$ over ID
validation samples.
The transformed score
\begin{equation}
  r_{k,f}(z) = \hat{F}_{k,f}\!\left(\ell_{k,f}(z)\right)
  \label{eq:pvalue}
\end{equation}
is a $p$-value: $r_{k,f}(z)\approx\mathrm{Uniform}(0,1)$ for ID samples by
the probability integral transform; small values indicate OOD.

\paragraph{Why monolithic concatenated models underperform.}
A single score network trained on the full concatenated representation
$z=[z_1;\ldots;z_K]\!\in\!\mathbb{R}^D$, $D\!=\!\sum_k d_k$,
must enforce block-diagonal coupling in order to preserve the individual
geometry of each encoder's representation space: cross-encoder weight
sharing conflates the distinct distributional structures of different
encoders.
Under this constraint, the score-matching loss
$\mathbb{E}\bigl[\|s_\theta(z{+}\sigma\epsilon,t)+\epsilon/\sigma\|^2\bigr]$
is minimized by a score that acts independently on each partition---the
optimal score $s^*(z_k, t)$ in partition $k$ depends only on $z_k$---so
the model converges to a product of per-encoder marginals
$\log p_\theta(z) = \sum_k \log p_k(z_k)$,
equivalent to separate per-encoder models but at $2.3\times$ more parameters.

\paragraph{Monolithic baseline.}
For comparison we train a monolithic score network on the full concatenated
representation $z\!\in\!\mathbb{R}^{3328}$.
It uses a block-diagonal MLP with three encoder-aligned blocks of hidden
dimension $H_k\!=\!4d_k$ (yielding block widths $2048$, $3072$, $8192$ for
CLIP, DINOv2, and ResNet-50 respectively), a total hidden dimension of
$13{,}312$, and $L\!=\!6$ residual layers---trained with the same VP-SDE
objective as the per-encoder models.
As shown in Table~\ref{tab:main}, the $H_k\!=\!2d_k$ model fails on three
of four benchmarks (SVHN 0.799, CIFAR-100 0.617, CIFAR-10C 0.631); $H_k\!=\!4d_k$
is the minimum scale at which the monolithic model reaches $\geq\!0.90$ AUROC
across all shift types, ruling out the possibility that the cost gap is an
artefact of an over-parameterised baseline.

\paragraph{Model capacities.}
Table~\ref{tab:capacity} compares parameter counts.
The six per-encoder models total 285M parameters; the monolithic Score Network
baseline totals 647M.
Per-encoder training is also parallelizable and can target smaller GPU budgets.

\begin{table}[t]
  \caption{Parameter counts per model. $H_k = 2d_k$ for per-encoder models;
    the monolithic Score Network uses $H_k\!=\!4d_k$ per encoder block.}
  \label{tab:capacity}
  \centering
  \small
  \begin{tabular}{lrr}
    \toprule
    Model & $H$ & Params \\
    \midrule
    E1 (CLIP, $d_1\!=\!512$)      & 1024 &   7.5M \\
    E2 (DINOv2, $d_2\!=\!768$)    & 1536 &  16.7M \\
    E3 (ResNet, $d_3\!=\!2048$)   & 4096 & 118.0M \\
    \midrule
    Per-encoder suite (6 models)  & ---  & \textbf{285M} \\
    Monolithic Score Network      & 13312 & 647M \\
    \bottomrule
  \end{tabular}
\end{table}

% ============================================================
\section{Encoder Analysis and Combination Design}
\label{sec:analysis}
% ============================================================
\begin{table*}[t]
  \caption{ID-data diagnostics for all per-encoder models alongside observed Near-OOD
    performance.
    $\eta^2_\mathrm{c}$: ANOVA effect size on clean ID images.
    $\eta^2_\mathrm{corr}$: same, under synthetic corruptions.
    $\Delta\mu$: mean log-likelihood shift under synthetic corruptions. All from ID data only.}
  \label{tab:diagnostics}
  \centering
  \begin{tabular}{lcccccc}
    \toprule
    \textbf{Model} &
      $\boldsymbol{\eta^2_\mathrm{c}}$ &
      $\boldsymbol{\eta^2_\mathrm{corr}}$ &
      $\boldsymbol{\Delta\mu}$ &
      \textbf{SVHN} &
      \textbf{CIFAR-100} &
      \textbf{CIFAR-10C} \\
    \midrule
    E1 normed-fork  (CLIP)    & 0.096 & 0.092 &   282 & 0.977 & 0.841 & 0.907 \\
    E1 unnormed-fork  (CLIP)          & 0.133 & 0.106 &   180 & 0.975 & 0.830 & 0.887 \\
    E2 normed-fork  (DINOv2)  & 0.243 & 0.112 &    81 & 0.859 & 0.963 & 0.568 \\
    E2 unnormed-fork  (DINOv2)          & 0.154 & 0.084 &    93 & 0.868 & 0.964 & 0.569 \\
    E3 normed-fork  (ResNet)  & 0.109 & 0.046 & 4171  & 0.762 & 0.770 & 0.981 \\
    E3 unnormed-fork  (ResNet)         & 0.071 & 0.015 &  691  & 0.737 & 0.697 & 0.926 \\
    \bottomrule
  \end{tabular}
\end{table*}

\subsection{ID-Data Diagnostics: Characterising Test Power}
\label{sec:diagnostics}

Different near-OOD benchmarks present distinct pain points---global domain shift, fine-grained semantic shift, and
covariate corruption--- where prior methods most often
fail, each demanding a different kind of encoder sensitivity.
The following diagnostics surface these differences from ID data alone.

\paragraph{$\eta^2$: class-conditional F-test.}
Group $\{\ell_{k,f}(z_i)\}$ on the ID validation set by class label.
The ANOVA effect size
\begin{equation}
  \eta^2_{k,f} = \frac{\mathrm{SS}_\mathrm{between}}{\mathrm{SS}_\mathrm{total}}
  \label{eq:eta2}
\end{equation}
measures the fraction of log-likelihood variance explained by class identity.
High $\eta^2$ means the density model reflects fine-grained class structure,
predicting high sensitivity to semantic OOD (e.g.\ CIFAR-100).
Conversely, a low-$\eta^2$ model has learned a broad ID density that is not
overfit to class detail, predicting sensitivity to global domain shift (SVHN)
where class distinctions are irrelevant.
We find Spearman $\rho\!=\!0.771$ between $\eta^2$ and CIFAR-100 AUROC across all
six per-encoder models.

\paragraph{$\Delta\mu$: synthetic corruption probe.}
Apply a fixed synthetic corruption to clean ID test images
(GaussianBlur, kernel 3, $\sigma\!=\!1.0$; additive noise $\sigma\!=\!0.1$)
and compute:
\begin{equation}
  \Delta\mu_{k,f} = \mu_\mathrm{clean} - \mu_\mathrm{corrupt}.
  \label{eq:deltamu}
\end{equation}
All corrupted images are derived from ID data; no OOD dataset is consulted.
Large $\Delta\mu$ indicates corruption-sensitive features, predicting high power
against covariate shift.
We find Spearman $\rho=1.0$ between $\Delta\mu$ and CIFAR-10C AUROC.
DINOv2 is designed for corruption invariance~\citep{oquab2023dinov2}: its
$\Delta\mu\!\approx\!87$ (vs.\ 4171 for ResNet) correctly predicts its
CIFAR-10C AUROC of 0.568 vs.\ 0.981.

\paragraph{$\rho$: cross-encoder correlation screening.}
Spearman rank correlations between per-encoder log-likelihood vectors on
the ID test set quantify redundancy.
Within-encoder fork pairs (normed/unnormed) yield $\rho\!\approx\!0.92$:
nearly redundant on ID data, but differing in OOD-space geometry
(the normed fork amplifies corruption sensitivity; the unnormed fork
preserves raw semantic geometry).
Cross-encoder pairs yield $\rho\!\in\![0.17,0.39]$: genuinely complementary,
justifying multi-encoder combination.

\begin{remark}
\textit{The $\eta^2$ and $\Delta\mu$ diagnostics reveal a strict encoder specialisation:
no single encoder dominates across all shift types simultaneously
(Figure~\ref{fig:diagnostics}, Table~\ref{tab:diagnostics}).
CLIP maximises global domain-shift detection but underperforms on covariate shift;
DINOv2 leads on semantic OOD but is nearly blind to corruptions ($\Delta\mu\!\approx\!87$
vs.\ 4171 for ResNet);
ResNet-50 dominates CIFAR-10C but ranks last on SVHN and CIFAR-100.}
\end{remark}

Table~\ref{tab:diagnostics} further adds the $\eta^2$ under synthetic corruption
($\eta^2_\mathrm{corr}$).
Under corruptions, DINOv2's $\eta^2$ halves (0.243$\to$0.112): class structure
dissolves because DINOv2 features are explicitly trained to be corruption-invariant.
CLIP's $\eta^2$ is essentially stable (0.096$\to$0.092), consistent with its
known sensitivity to class-discriminative corruption patterns.
ResNet's $\eta^2$ drops sharply (0.109$\to$0.046) because low-level texture
features---the most corruption-sensitive---are class-agnostic.

\begin{figure}[t]
  \centering
  \includegraphics[width=\textwidth]{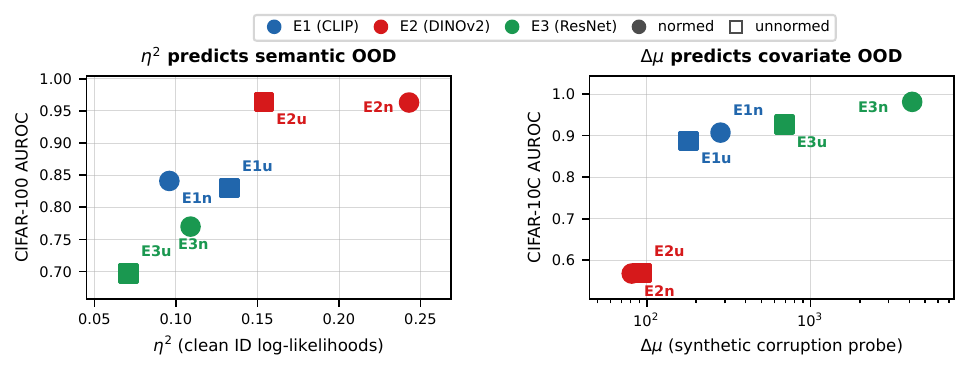}
  \caption{ID-data diagnostics predict OOD capability without observing any OOD
    samples.
    \textbf{Left}: $\eta^2$ (class structure in ID log-likelihoods) vs.\
    CIFAR-100 AUROC (Spearman $\rho\!=\!0.771$).
    \textbf{Right}: $\Delta\mu$ (score shift under synthetic corruptions of ID
    images) vs.\ CIFAR-10C AUROC (Spearman $\rho\!=\!1.0$).
    Points are labelled E1n/E1u/E2n/E2u/E3n/E3u (n~=~normed, u~=~unnormed).}
  \label{fig:diagnostics}
\end{figure}

\subsection{Two-Level $\min(\cdot)$-gate Combination}
\label{sec:combination}

Because the type of distribution shift is unknown at test time, any \emph{fixed}
weighting of encoder contributions---including a monolithic concatenated model---will
underperform on at least one shift type.
The combination must therefore be \emph{input-driven}: deferring to whichever
encoder raises the strongest alarm for a given test point, without committing to
any encoder in advance.

\begin{algorithm}[t]
\caption{EncMin2L: Two-Level Min-Gate OOD Detection}
\label{alg:encmin2l}
\begin{algorithmic}[1]
\Require Encoders $\{E_k\}_{k=1}^{K}$, each with normed and unnormed forks;
  ID validation set $\mathcal{Z}_\mathrm{val}$; FPR target $\alpha$
\Ensure OOD/ID decision for test point $x$
\Statex
\Statex \textbf{Calibration} \textit{(offline, once)}
\For{each encoder $k$, fork $f \in \{\mathrm{n},\mathrm{u}\}$}
  \State compute $r_{k,f}(z_i)$ for all $z_i \in \mathcal{Z}_\mathrm{val}$
    \Comment{PF-ODE log-likelihood}
\EndFor
\For{each encoder $k$}
  \State $e_k(z_i) \leftarrow \min\!\bigl(r_{k,\mathrm{n}}(z_i),\; r_{k,\mathrm{u}}(z_i)\bigr)$
    \Comment{Level~1: within-encoder min}
  \State fit empirical CDF $\hat{G}_k$ from $\{e_k(z_i)\}_{i}$
    \Comment{re-maps $\mathrm{Beta}(1,2) \to$ Uniform; Prop.~\ref{prop:beta}}
  \State $\hat{e}_k(z_i) \leftarrow \hat{G}_k\!\bigl(e_k(z_i)\bigr)$
\EndFor
\State $s(z_i) \leftarrow \min_{k}\;\hat{e}_k(z_i)$ for all $z_i$
  \Comment{Level~2: cross-encoder min}
\State $\tau \leftarrow \alpha$-quantile of $\{s(z_i)\}_{i}$
\Statex
\Statex \textbf{Inference} \textit{(online)}
\For{each encoder $k$, fork $f \in \{\mathrm{n},\mathrm{u}\}$}
  \State compute $r_{k,f}(x)$
    \Comment{PF-ODE log-likelihood}
\EndFor
\For{each encoder $k$}
  \State $e_k(x) \leftarrow \min\!\bigl(r_{k,\mathrm{n}}(x),\; r_{k,\mathrm{u}}(x)\bigr)$
  \State $\hat{e}_k(x) \leftarrow \hat{G}_k\!\bigl(e_k(x)\bigr)$
\EndFor
\State $s(x) \leftarrow \min_{k}\;\hat{e}_k(x)$
\State \Return \textbf{OOD} if $s(x) < \tau$, else \textbf{ID}
\end{algorithmic}
\end{algorithm}

Algorithm~\ref{alg:encmin2l} implements Tippett's minimum $p$-value
combination~\citep{westfall1993resampling}: declare OOD if any individual
test rejects.
By Proposition~\ref{prop:beta} below, $\hat{e}_k$ is approximately uniform
on ID data, so $s(z)$ is well-calibrated for threshold setting.

\begin{figure}[t]
  \centering
  % fig_arch.tex — encmin2L two-level min-gate architecture diagram.
% Input this file inside a \begin{figure}...\end{figure} environment.
% Requires: \usetikzlibrary{arrows.meta} in preamble.
\resizebox{0.75\columnwidth}{!}{%
\begin{tikzpicture}[
  >=Stealth,
  font=\scriptsize,
  % ---- node styles ----
  imgbox/.style={draw, rounded corners=3pt, fill=blue!6,
                 minimum width=1.7cm, minimum height=0.55cm, align=center},
  encbox/.style={draw, rounded corners=2pt, fill=gray!22,
                 minimum width=1.35cm, minimum height=0.52cm, align=center},
  ldmn/.style={draw, rounded corners=2pt, fill=blue!20,
               minimum width=1.05cm, minimum height=0.52cm, align=center},
  ldmu/.style={draw, rounded corners=2pt, fill=orange!30,
               minimum width=1.05cm, minimum height=0.52cm, align=center},
  gate1/.style={draw, rounded corners=2pt, fill=yellow!35, draw=yellow!60!black,
                minimum width=1.3cm, minimum height=0.5cm, align=center},
  cdfbox/.style={draw, rounded corners=2pt, fill=green!18, draw=green!55!black,
                 minimum width=1.3cm, minimum height=0.5cm, align=center},
  gate2/.style={draw, rounded corners=3pt, fill=yellow!40, draw=yellow!60!black,
                minimum width=3.8cm, minimum height=0.55cm, align=center},
  threshbox/.style={draw, rounded corners=3pt, fill=gray!15,
                    minimum width=2.0cm, minimum height=0.55cm, align=center},
  oodbox/.style={draw, rounded corners=3pt, fill=red!28, draw=red!60!black,
                 minimum width=1.2cm, minimum height=0.55cm, align=center,
                 font=\bfseries\scriptsize},
  idbox/.style={draw, rounded corners=3pt, fill=green!28, draw=green!60!black,
                minimum width=1.2cm, minimum height=0.55cm, align=center,
                font=\bfseries\scriptsize},
  lbl/.style={draw=none, fill=none, inner sep=1pt, font=\scriptsize},
  arr/.style={->, thick},
]

% ===================== ROW 0: INPUT =====================
\node[imgbox] (img) at (0, 0) {Input $x$};

% ===================== ROW 1: ENCODERS =====================
\node[encbox] (E1) at (-2.5, -1.3) {E1 (CLIP)};
\node[encbox] (E2) at ( 0.0, -1.3) {E2 (DINOv2)};
\node[encbox] (E3) at ( 2.5, -1.3) {E3 (ResNet)};

\draw[arr] (img.south) -- (E1.north);
\draw[arr] (img.south) -- (E2.north);
\draw[arr] (img.south) -- (E3.north);

% ===================== ROW 2: RDMs (normed=blue, unnormed=orange) =====================
\node[ldmn] (L1n) at (-3.2, -2.65) {RDM\\[-1pt]normed};
\node[ldmu] (L1u) at (-1.8, -2.65) {RDM\\[-1pt]unnorm.};
\node[ldmn] (L2n) at (-0.7, -2.65) {RDM\\[-1pt]normed};
\node[ldmu] (L2u) at ( 0.7, -2.65) {RDM\\[-1pt]unnorm.};
\node[ldmn] (L3n) at ( 1.8, -2.65) {RDM\\[-1pt]normed};
\node[ldmu] (L3u) at ( 3.2, -2.65) {RDM\\[-1pt]unnorm.};

\draw[arr] (E1.south) -- (L1n.north);
\draw[arr] (E1.south) -- (L1u.north);
\draw[arr] (E2.south) -- (L2n.north);
\draw[arr] (E2.south) -- (L2u.north);
\draw[arr] (E3.south) -- (L3n.north);
\draw[arr] (E3.south) -- (L3u.north);

% p-value annotations (below each RDM); unnormed labels shifted right to clear arrow
\node[lbl] at (-3.2, -3.3) {$r_{1,\mathrm{n}}$};
\node[lbl] at (-1.55, -3.3) {$r_{1,\mathrm{u}}$};
\node[lbl] at (-0.7, -3.3) {$r_{2,\mathrm{n}}$};
\node[lbl] at ( 0.95, -3.3) {$r_{2,\mathrm{u}}$};
\node[lbl] at ( 1.8, -3.3) {$r_{3,\mathrm{n}}$};
\node[lbl] at ( 3.45, -3.3) {$r_{3,\mathrm{u}}$};

% ===================== ROW 3: LEVEL 1 MIN GATES =====================
\node[gate1] (G1) at (-2.5, -4.1) {$\min(\cdot{,}\cdot)$};
\node[gate1] (G2) at ( 0.0, -4.1) {$\min(\cdot{,}\cdot)$};
\node[gate1] (G3) at ( 2.5, -4.1) {$\min(\cdot{,}\cdot)$};

\draw[arr] (L1n.south) -- (G1.north);
\draw[arr] (L1u.south) -- (G1.north);
\draw[arr] (L2n.south) -- (G2.north);
\draw[arr] (L2u.south) -- (G2.north);
\draw[arr] (L3n.south) -- (G3.north);
\draw[arr] (L3u.south) -- (G3.north);

% "Level 1" labels in the gaps between the three min gates
\node[lbl, font=\scriptsize] at (-1.25, -4.1) {Level~1};
\node[lbl, font=\scriptsize] at ( 1.25, -4.1) {Level~1};

% e_k annotations (below each gate); shifted right to clear the arrow
\node[lbl] at (-2.25, -4.65) {$e_1$};
\node[lbl] at ( 0.25, -4.65) {$e_2$};
\node[lbl] at ( 2.75, -4.65) {$e_3$};

% ===================== ROW 4: RE-CDF (Prop. 1 recalibration) =====================
\node[cdfbox] (C1) at (-2.5, -5.3) {$\hat{G}_1(e_1)$};
\node[cdfbox] (C2) at ( 0.0, -5.3) {$\hat{G}_2(e_2)$};
\node[cdfbox] (C3) at ( 2.5, -5.3) {$\hat{G}_3(e_3)$};

\draw[arr] (G1.south) -- (C1.north);
\draw[arr] (G2.south) -- (C2.north);
\draw[arr] (G3.south) -- (C3.north);

% ê_k annotations (below each CDF box)
\node[lbl] at (-2.5, -5.85) {$\hat{e}_1$};
\node[lbl] at ( 0.25, -5.85) {$\hat{e}_2$};
\node[lbl] at ( 2.5, -5.85) {$\hat{e}_3$};

% ===================== ROW 5: LEVEL 2 MIN GATE =====================
\node[gate2] (G2L) at (0, -6.55)
  {Level~2 (cross-encoder): $\displaystyle\min_k \hat{e}_k = s(z)$};

\draw[arr] (C1.south) -- (G2L.north);
\draw[arr] (C2.south) -- (G2L.north);
\draw[arr] (C3.south) -- (G2L.north);

% ===================== ROW 6: THRESHOLD =====================
\node[threshbox] (thresh) at (0, -7.45) {$s(z) < \tau\;$?};
\draw[arr] (G2L.south) -- (thresh.north);

% ===================== ROW 7: DECISION =====================
\node[oodbox] (ood) at (-1.9, -8.4) {\textbf{OOD}};
\node[idbox]  (id)  at ( 1.9, -8.4) {\textbf{ID}};

\draw[arr] (thresh.south) to[out=-130, in=90]
  node[lbl, left, yshift=7pt] {yes} (ood.north);
\draw[arr] (thresh.south) to[out=-50, in=90]
  node[lbl, right, yshift=7pt] {no}  (id.north);

\end{tikzpicture}%
}% end \resizebox
  \caption{Architecture of the two-level $\min(\cdot)$-gate (EncMin2L).
    Each encoder produces two $p$-values: $r_{k,\mathrm{n}}$ (blue, normed repr.)
    and $r_{k,\mathrm{u}}$ (orange, unnormed repr.).
    Level~1 takes the within-encoder minimum, re-CDF recalibration
    (Proposition~\ref{prop:beta}) restores uniformity, and Level~2
    applies the cross-encoder Tippett minimum
    (Algorithm~\ref{alg:encmin2l}) to produce the final score $s(z)$.}
  \label{fig:arch}
\end{figure}

\subsection{Theoretical Justification}
\label{sec:theory}

\begin{proposition}[Re-calibration between levels]
  \label{prop:beta}
  Let $U_1, U_2 \overset{iid}{\sim} \mathrm{Uniform}(0,1)$.
  Then $M\!=\!\min(U_1,U_2)\sim\mathrm{Beta}(1,2)$ with
  $\mathbb{E}[M]\!=\!\tfrac{1}{3}$ and PDF $f_M(m)\!=\!2(1-m)$.
  More generally, $\min_k U_k \sim \mathrm{Beta}(1,K)$.
  Applying the empirical CDF $\hat{G}_k$ of $\{e_k(z_i)\}_{i}$ (ID validation)
  re-maps $e_k$ to an approximately uniform distribution, restoring the
  uniformity assumption for the Level~2 combination.
\end{proposition}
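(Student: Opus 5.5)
The plan is to split the statement into its three claims: the exact distribution of the minimum of two uniforms, its $K$-fold generalisation, and the approximate uniformity of $\hat{G}_k(e_k)$. The first two are settled by the survival function. For $m\in[0,1]$, independence gives
\begin{equation*}
  \Pr(\min_k U_k > m) = \prod_{k=1}^{K}\Pr(U_k>m) = (1-m)^K ,
\end{equation*}
so $F_M(m)=1-(1-m)^K$. Differentiating yields $f_M(m)=K(1-m)^{K-1}$, which is exactly the $\mathrm{Beta}(1,K)$ density, since $B(1,K)=1/K$. Taking $K=2$ gives $f_M(m)=2(1-m)$ and $\mathbb{E}[M]=\int_0^1 2m(1-m)\,dm=1/3$, which also matches the Beta mean $1/(1+K)$.

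For the re-calibration claim I would first argue at the population level. Let $G_k$ be the true CDF of $e_k$ under the ID distribution. Because $e_k$ is a function of the continuous likelihoods $\ell_{k,\mathrm{n}},\ell_{k,\mathrm{u}}$, it has a continuous distribution, and the probability integral transform then gives $G_k(e_k)\sim\mathrm{Uniform}(0,1)$. This step needs neither independence of the forks nor exact uniformity of $r_{k,f}$. This is worth stressing, because the $\rho\approx0.92$ fork correlation from \S\ref{sec:diagnostics} means the $\mathrm{Beta}(1,2)$ law is only the independent-case benchmark. The empirical recalibration therefore corrects whatever null law $e_k$ actually has.

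Next I would pass from $G_k$ to $\hat{G}_k$. Using Glivenko--Cantelli, or quantitatively the DKW inequality, $\sup_t|\hat{G}_k(t)-G_k(t)|=O_p(N^{-1/2})$ for $N$ validation points. It follows that $\Pr(\hat{G}_k(e_k(z))\le u)=u+O_p(N^{-1/2})$ uniformly in $u$ for a fresh ID point $z$ that is independent of the validation set. If $z$ is exchangeable with the validation points, the conformal rank argument sharpens this to the exact discrete bound $\Pr(\hat{G}_k(e_k)\le u)\le u+1/(N+1)$.

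The main obstacle is making ``approximately'' precise in the presence of the nested use of the empirical CDFs $\hat{F}_{k,f}$ inside $e_k$. Two sources of error sit on top of each other: the fork $p$-values $r_{k,f}$ are themselves only approximately uniform, and in Algorithm~\ref{alg:encmin2l} the same validation set is reused both to fit $\hat{G}_k$ and to calibrate $\tau$. I would handle this by conditioning on the fitted $\hat{F}_{k,f}$. Given those fits, $e_k$ is a fixed measurable function of $z$, so the argument above applies to its conditional law. Sample reuse I would treat via exchangeability, or else state the result for a split calibration set.

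Finally I would note what the proposition does not deliver. It restores uniformity of each $\hat{e}_k$ marginally, but the Level~2 minimum is $\mathrm{Beta}(1,K)$ only under cross-encoder independence. Since cross-encoder correlations are low but nonzero, $\rho\in[0.17,0.39]$, the empirical $\alpha$-quantile $\tau$ is what actually guarantees the FPR, and no appeal to the Beta law is needed there.
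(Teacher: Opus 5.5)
The paper states Proposition~\ref{prop:beta} without any proof, so there is no competing route to compare against. Your survival-function computation $\Pr(\min_k U_k>m)=(1-m)^K$ is the standard argument and is correct. It gives $f_M(m)=K(1-m)^{K-1}$ via $B(1,K)=1/K$, and $\mathbb{E}[M]=1/3$ at $K=2$. Two of your remarks are more accurate than the paper's own gloss ``re-maps $\mathrm{Beta}(1,2)\to$ Uniform'' in Algorithm~\ref{alg:encmin2l}. First, $\hat G_k$ does not depend on the $\mathrm{Beta}(1,2)$ law, which with fork correlation $\rho\approx0.92$ is only an independent-case benchmark. Second, the Level~2 false-positive rate is set by the empirical quantile $\tau$, not by a $\mathrm{Beta}(1,K)$ null.

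One step fails as written: the claim that $e_k$ has a continuous distribution, so that $G_k(e_k)\sim\mathrm{Uniform}(0,1)$ exactly. In the paper, $r_{k,f}=\hat F_{k,f}(\ell_{k,f})$ is built from an \emph{empirical} CDF, which is a step function. Hence $e_k=\min(r_{k,\mathrm{n}},r_{k,\mathrm{u}})$ lives on the grid $\{0,1/N,\dots,1\}$. Your proposed remedy, conditioning on the fitted $\hat F_{k,f}$, makes this worse, not better. Conditionally, $e_k$ has at most $N+1$ atoms, $G_k$ is a step function, and $G_k(e_k)$ is not uniform. The general principle you invoke is also false: a function of continuous variables need not be continuous. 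What is true is that a minimum of atomless variables is atomless, since $\Pr(\min(X,Y)=c)\le\Pr(X=c)+\Pr(Y=c)$.

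The conclusion survives with a repair, and there are two ways to do it. The first keeps the discrete $e_k$ and uses the discrete PIT, $u-a\le\Pr(G_k(e_k)\le u)\le u$. The largest atom $a$ is at most twice the largest spacing of $N$ uniform order statistics, which is $O_p(\log N/N)$ and negligible against your $N^{-1/2}$ DKW term. However, this route still needs a split so that the $e_k(z_i)$ are draws from $G_k$. The second route works with the oracle score $e_k^\ast=\min\bigl(F_{k,\mathrm{n}}(\ell_{k,\mathrm{n}}),F_{k,\mathrm{u}}(\ell_{k,\mathrm{u}})\bigr)$. This score is atomless, and its CDF $G_k^\ast$ is $2$-Lipschitz. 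Set $\delta=\max_f\sup_t|\hat F_{k,f}(t)-F_{k,f}(t)|$; then $|e_k-e_k^\ast|\le\delta$ holds everywhere, including at the validation points. Sandwiching $\hat G_k$ between shifted empirical CDFs of the i.i.d.\ oracle scores gives $|\hat G_k(e_k(z))-G_k^\ast(e_k^\ast(z))|\le4\delta+\epsilon$, where $\epsilon$ is their DKW error. Hence $\Pr(\hat G_k(e_k(z))\le u)=u+O_p(N^{-1/2})$. This second route handles the sample reuse in Algorithm~\ref{alg:encmin2l} directly.

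Your exact conformal bound $u+1/(N+1)$ has a similar restriction. It needs the \emph{scores} $e_k(z),e_k(z_1),\dots,e_k(z_N)$ to be exchangeable, not just the points. That fails under reuse, because each $z_i$ is in-sample for its own $\hat F_{k,f}$ while the test point is not. So that bound holds only in your split-calibration version.
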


\begin{proposition}[Min-gate power inheritance]
  \label{prop:power}
  Let $U_1,\ldots,U_K$ be the CDF-transformed per-encoder scores produced by
  Algorithm~\ref{alg:encmin2l}, and let $M = \min_k U_k$.
  For any OOD distribution $q$ and threshold $\tau \in [0,1]$:
  \[
    P_{x \sim q}(M < \tau) \;\geq\; \max_{k}\; P_{x \sim q}(U_k < \tau).
  \]
  \begin{proof}
    $M \leq U_k$ for all $k$, so $\{M < \tau\} \supseteq \{U_k < \tau\}$
    for every $k$; taking probabilities gives the inequality.
  \end{proof}
\end{proposition}

\begin{remark}
  \textit{Proposition~\ref{prop:power} is unconditional: it requires no
  distributional assumptions on $q$.
  The min-gate inherits the detection power of whichever constituent encoder
  is most discriminative for the OOD type at hand---without prior knowledge of
  which that will be.
  The only assumption in the full system is encoder independence under
  $p_\mathrm{ID}$, needed for the Beta$(1,K)$ null in
  Proposition~\ref{prop:beta}; the power guarantee holds regardless.}
\end{remark}

% ============================================================
\section{Main Results and Experiments}
\label{sec:experiments}
% ============================================================

EncMin2L is our proposed two-level $\min(\cdot)$-gate combination (\S\ref{sec:combination}):
it aggregates per-encoder log-likelihood $p$-values via within-encoder minimum
(Level~1), re-CDF recalibration (Proposition~\ref{prop:beta}), and cross-encoder
Tippett minimum (Level~2), calibrated on ID validation data alone with no OOD
labels at any stage.
We evaluate EncMin2L against classifier-based, feature-space, and diffusion-based
baselines in the following experimental setting.

\subsection{Setup}
\label{sec:setup}

\textbf{ID datasets.}
We evaluate under two settings.
\emph{CIFAR-10}~\citep{cifar100}: $32\!\times\!32$ RGB, bicubic-upsampled to $224\!\times\!224$ for
encoding; 50K train, 10K test with 10\% held out for validation.
\emph{CIFAR-100}~\citep{cifar100}: same resolution, encoder preprocessing, and split
protocol, with 100 classes.

\textbf{OOD benchmarks.}
Under CIFAR-10 as ID:
SVHN~\cite{netzer2011reading} (global domain shift),
CIFAR-100~\cite{cifar100} (fine-grained semantic shift),
CIFAR-10C~\citep{hendrycks2019robustness} (covariate shift, 15 corruption types),
DTD~\cite{cimpoi14describing} (texture), and
CelebA~\cite{liu2015faceattributes} (face domain).
Under CIFAR-100 as ID:
CIFAR-10 (coarser-grained semantic shift), SVHN, DTD, and CelebA.
CIFAR-10C is omitted from the CIFAR-100 setting as no matched corruption benchmark
exists for CIFAR-100.

\textbf{Metrics.} AUROC ($\uparrow$) and FPR at 95\% TPR (FPR@95, $\downarrow$;
in parentheses throughout).
Higher AUROC and lower FPR indicate better detection.

\textbf{Baselines.}
\emph{Per-encoder}: each of the six single-encoder models (E1/E2/E3
$\times$ normed/unnormed) as individual detectors.
\emph{Monolithic}: Score Network trained on the full $D\!=\!3328$
concatenated representation in normed, unnormed, and dual-fork configurations.
\emph{Classifier-based}: Maximum Softmax Probability
(MSP)~\citep{hendrycks2017baseline}, Mahalanobis
distance~\citep{lee2018mahalanobis}, and energy score~\citep{liu2020energy}
applied to a ResNet-50 classifier trained on the respective ID dataset.
\emph{Feature-space}: $k$-NN distance~\citep{sun2022knn} ($k\!=\!50$) in the
representation space of each pretrained encoder.

\subsection{Main Results}

Table~\ref{tab:main} reports results for all evaluated methods under CIFAR-10 as ID.
EncMin2L achieves $\geq\!0.94$ AUROC on all four benchmarks simultaneously---a
bar no individual encoder or monolithic baseline clears---outperforming the best
monolithic competitor (dual-fork Score Network) by $+0.024$ on SVHN and
$+0.047$ on CIFAR-100 while using $2.3\!\times$ fewer parameters.

The encoder specialisation predicted by the \S\ref{sec:diagnostics} diagnostics
is directly visible in Table~\ref{tab:main}.
CLIP (E1) leads on global domain shift (SVHN 0.977) but underperforms where
semantic class structure matters, consistent with its low $\eta^2$.
DINOv2 (E2) leads on semantic OOD (CIFAR-100 0.964) yet collapses on covariate
shift (CIFAR-10C 0.568), exactly as its near-zero $\Delta\mu$ predicted.
ResNet-50 (E3) inverts this: its large $\Delta\mu$ flags it as the covariate
specialist, and it peaks at CIFAR-10C 0.981 while trailing on SVHN and
CIFAR-100.
Because each encoder has a structural blind spot on at least one shift type,
EncMin2L's $\min(\cdot)$-gate defers to whichever encoder raises the strongest alarm per
test point---and its worst-case AUROC of 0.951 (SVHN) reflects that no shift
type is left uncovered. DTD and CelebA are saturated across all our models (AUROC $\geq\!0.999$ on DTD; $1.000$ on CelebA with FPR $0.000$), reflecting the extreme perceptual distance of texture patches and face images from CIFAR-10 objects; we discuss both jointly in \S\ref{sec:comparison}.
The monolithic dual-fork score network is the strongest monolithic competitor,
yet EncMin2L surpasses it on both SVHN and CIFAR-100.

Under CIFAR-100 as ID (Table~\ref{tab:cifar100}), the diagnostics again correctly
predict encoder roles: E2 (DINOv2) leads on CIFAR-10 as OOD ($0.923$), benefiting
from its high $\eta^2$ when the ID manifold spans 100 classes and the shift is to
coarser-grained objects; E1 (CLIP) retains its SVHN advantage ($0.903$).
EncMin2L does not achieve $\geq\!0.94$ on SVHN ($0.806$): E2 and E3 both assign
high likelihood to SVHN under the richer 100-class ID distribution, creating a
per-encoder floor the $\min(\cdot)$-gate cannot overcome.
Monolithic models remain the weakest near-OOD detectors, extending the
isotropic noise finding beyond CIFAR-10.

\begin{table*}[t]
  \caption{\textbf{CIFAR-10 as ID}: AUROC (FPR@95\%TPR) on five OOD benchmarks.
    \textbf{Bold}: \textbf{EncMin2L is the only method with $\geq$0.94 on all four non-saturated benchmarks.}
    ``---'': not evaluated by that work.
    VDM uses a ResNet-18 encoder; DiffPath is a CelebA-trained pixel-space DM applied zero-shot.}
  \label{tab:main}
  \centering
  \scriptsize
  \begin{tabular}{lccccc}
    \toprule
    & \multicolumn{3}{c}{\textit{Near-OOD}} & \multicolumn{2}{c}{\textit{Far-OOD}} \\
    \cmidrule(lr){2-4}\cmidrule(lr){5-6}
    \textbf{Model} &
      \textbf{SVHN} &
      \textbf{CIFAR-100} &
      \textbf{CIFAR-10C} &
      \textbf{DTD} &
      \textbf{CelebA} \\
    \midrule
    \multicolumn{6}{l}{\textit{Single-encoder per-fork models}} \\
    E1 normed-fork \ (CLIP, 512-d)       & 0.977 (0.139) & 0.841 (0.479) & 0.907 (0.551) & 1.000 (0.000) & 1.000 (0.000) \\
    % E1 unnorm                     & 0.975 (0.160) & 0.830 (0.491) & 0.887 (0.612) & 1.000 (0.000) & 1.000 (0.000) \\
    % E2 norm \ (DINOv2, 768-d)     & 0.859 (0.483) & 0.963 (0.176) & 0.568 (0.965) & 1.000 (0.001) & 1.000 (0.003) \\
    E2 raw-fork \ (DINOv2, 768-d)        & 0.868 (0.410) & 0.964 (0.163) & 0.569 (0.957) & 1.000 (0.001) & 1.000 (0.003) \\
    E3 normed-fork \ (ResNet-50, 2048-d) & 0.762 (0.811) & 0.770 (0.613) & 0.981 (0.090) & 0.999 (0.004) & 1.000 (0.000) \\
    % E3 unnorm                     & 0.737 (0.847) & 0.697 (0.797) & 0.926 (0.381) & 0.998 (0.006) & 1.000 (0.000) \\
    \midrule
    \multicolumn{6}{l}{\textit{Monolithic multi-encoder models}} \\
    Score Network ($H_k\!=\!2d_k$) & 0.799 (0.859) & 0.617 (0.981) & 0.631 (0.949) & 0.961 (0.171) & 1.000 (0.000) \\
    Score Network ($H_k\!=\!4d_k$) & 0.927 (0.388) & 0.895 (0.388) & 0.973 (0.127) & 1.000 (0.000) & 0.999 (0.000) \\
    \midrule
    \textbf{EncMin2L (ours)} & \textbf{0.951 (0.309)} & \textbf{0.942 (0.255)} & \textbf{0.963 (0.195)} & \textbf{0.999 (0.000)} & \textbf{1.000 (0.000)} \\
    \midrule
    \multicolumn{6}{l}{\textit{Classifier-based methods (ResNet-50 on CIFAR-10)}} \\
    MSP~\citep{hendrycks2017baseline}           & 0.916 (0.544) & 0.885 (0.599) & 0.915 (0.534) & 0.899 (0.548) & 0.856 (0.705) \\
    Mahalanobis~\citep{lee2018mahalanobis}      & 0.957 (0.238) & 0.830 (0.676) & 0.981 (0.093) & 0.957 (0.214) & 0.772 (0.779) \\
    Energy~\citep{liu2020energy}                & 0.916 (0.409) & 0.862 (0.533) & 0.885 (0.583) & 0.849 (0.562) & 0.809 (0.708) \\
    \midrule
    \multicolumn{6}{l}{\textit{Feature-space methods (same pretrained encoders, $k$=50)}} \\
    $k$-NN~\citep{sun2022knn} (CLIP repr.)      & 0.930 (0.564) & 0.911 (0.439) & 0.921 (0.612) & 1.000 (0.000) & 1.000 (0.000) \\
    $k$-NN~\citep{sun2022knn} (DINOv2 repr.)    & 0.978 (0.157) & 0.976 (0.110) & 0.867 (0.746) & 1.000 (0.001) & 1.000 (0.001) \\
    $k$-NN~\citep{sun2022knn} (ResNet repr.)    & 0.970 (0.168) & 0.775 (0.784) & 0.936 (0.395) & 0.990 (0.051) & 0.911 (0.668) \\
    \midrule
    \multicolumn{6}{l}{\textit{Recent diffusion-based OOD methods}} \\
    EigenScore~\citep{shoushtari2026eigenscore}    & 0.810 (---) & 0.880 (---) & ---           & 0.756 (---) & 0.873 (---) \\
    VDM log $p_T$~\citep{jarve2025vdm}            & 0.939 (0.213) & 0.885 (0.441) & ---         & 0.946 (0.227) & --- \\
    DiffPath-6D~\citep{heng2024diffpath}           & 0.910 (---) & 0.590 (---) & ---           & 0.923 (---) & 0.897 (---) \\
    DM Dual Threshold~\citep{kamkari2024geometric} & 0.944 (---) & 0.560 (---) & ---           & ---         & 0.648 (---) \\
    LMD~\citep{liu2023lmd}                         & 0.992 (---) & 0.607 (---) & ---           & ---         & 0.834 (---) \\
    Proj.\ Regret~\citep{choi2023projection}       & 0.993 (---) & 0.775 (---) & ---           & 0.910 (---) & --- \\
    \bottomrule
  \end{tabular}
\end{table*}

\subsection{Comparison with Feature-Space and Prior Methods}
\label{sec:comparison}

\begin{table*}[t]
  \caption{\textbf{CIFAR-100 as ID}: AUROC (FPR@95\%TPR) on four OOD benchmarks.}
  \label{tab:cifar100}
  \centering
  \scriptsize
  \begin{tabular}{lcccc}
    \toprule
    & \multicolumn{2}{c}{\textit{Near-OOD}} & \multicolumn{2}{c}{\textit{Far-OOD}} \\
    \cmidrule(lr){2-3}\cmidrule(lr){4-5}
    \textbf{Model} &
      \textbf{CIFAR-10} &
      \textbf{SVHN} &
      \textbf{DTD} &
      \textbf{CelebA} \\
    \midrule
    \multicolumn{5}{l}{\textit{Single-encoder per-fork models}} \\
    E1 normed-fork \ (CLIP, 512-d)       & 0.706 (0.800) & 0.903 (0.502) & 1.000 (0.000) & 1.000 (0.000) \\
    E2 normed-fork \ (DINOv2, 768-d)     & 0.923 (0.333) & 0.642 (0.750) & 0.995 (0.017) & 0.947 (0.336) \\
    E3 normed-fork \ (ResNet-50, 2048-d) & 0.627 (0.885) & 0.519 (0.962) & 0.994 (0.021) & 0.983 (0.082) \\
    \midrule
    \multicolumn{5}{l}{\textit{Monolithic multi-encoder model}} \\
    Score Network ($H_k\!=\!4d_k$) & 0.746 (0.796) & 0.744 (0.758) & 0.999 (0.006) & 0.998 (0.006) \\
    \midrule
    \textbf{EncMin2L (ours)} & \textbf{0.892 (0.428)} & \textbf{0.806 (0.648)} & \textbf{1.000 (0.000)} & \textbf{1.000 (0.000)} \\
    \midrule
    \multicolumn{5}{l}{\textit{Classifier-based methods (ResNet-50 on CIFAR-100)}} \\
    MSP~\citep{hendrycks2017baseline}           & 0.741 (0.843) & 0.784 (0.786) & 0.719 (0.874) & 0.632 (0.979) \\
    Mahalanobis~\citep{lee2018mahalanobis}      & 0.483 (0.981) & 0.884 (0.475) & 0.897 (0.359) & 0.282 (0.998) \\
    Energy~\citep{liu2020energy}                & 0.754 (0.831) & 0.852 (0.741) & 0.764 (0.869) & 0.451 (0.988) \\
    \midrule
    \multicolumn{5}{l}{\textit{Feature-space methods (same pretrained encoders, $k$=50)}} \\
    $k$-NN~\citep{sun2022knn} (CLIP repr.)      & 0.699 (0.786) & 0.734 (0.818) & 0.999 (0.003) & 1.000 (0.000) \\
    $k$-NN~\citep{sun2022knn} (DINOv2 repr.)    & 0.930 (0.352) & 0.881 (0.665) & 0.992 (0.030) & 0.909 (0.560) \\
    $k$-NN~\citep{sun2022knn} (ResNet repr.)    & 0.642 (0.869) & 0.770 (0.892) & 1.000 (0.002) & 0.898 (0.440) \\
    \midrule
    \multicolumn{5}{l}{\textit{Recent diffusion-based OOD methods}} \\
    EigenScore~\citep{shoushtari2026eigenscore}    & 0.642 (---) & 0.661 (---) & 0.627 (---) & 0.427 (---) \\
    VDM log $p_T$~\citep{jarve2025vdm}            & 0.777 (0.605) & 0.755 (0.633) & 0.758 (0.642) & --- \\
    DiffPath-6D~\citep{heng2024diffpath}           & 0.483 (---) & 0.724 (---) & 0.761 (---) & 0.887 (---) \\
    DM Dual Threshold~\citep{kamkari2024geometric} & 0.791 (---) & 0.945 (---) & --- & 0.902 (---) \\
    LMD~\citep{liu2023lmd}                         & 0.568 (---) & 0.985 (---) & --- & 0.595 (---) \\
    Proj.\ Regret~\citep{choi2023projection}       & 0.577 (---) & 0.945 (---) & 0.91 (---) & --- \\
    \bottomrule
  \end{tabular}
\end{table*}

\textbf{Classifier-based methods} use a supervised ResNet-50 trained on CIFAR-10.
Mahalanobis distance matches or exceeds EncMin2L on SVHN (0.957) and
CIFAR-10C (0.981): class-conditional Gaussian distances capture large-scale domain
and covariate shift well when ID class structure is informative of the shift.
On CIFAR-100, however, Mahalanobis collapses to 0.830---the 10-class covariance
model is too coarse to separate 100 semantically adjacent categories, exactly the
limitation the $\eta^2$ framing exposes: a covariance model can only be as
fine-grained as the label set it was trained with.
MSP and energy score are weaker overall.
None of the three achieves $\geq\!0.94$ simultaneously across all four benchmarks;
EncMin2L closes this gap without using class labels at any stage.

Under CIFAR-100, classifier-based collapse on CelebA is far more severe:
Mahalanobis drops to $0.282$ (from $0.772$ on CIFAR-10) and Energy to $0.451$.
With 100 class-conditional Gaussians the covariance model becomes too fragmented to
maintain a coherent face-vs-object boundary.
On CIFAR-10 as OOD, Mahalanobis also collapses to $0.483$: classes shared between
CIFAR-10 and CIFAR-100 (vehicles, animals) lie geometrically near the ID means,
making Mahalanobis distance unreliable precisely where ID and near-OOD share
semantic overlap.

\textbf{Feature-space $k$-NN} ($k$=50) on the same pretrained features uses identical
representations but replaces the learned density with a non-parametric distance score.
Under CIFAR-10 as ID, DINOv2 $k$-NN (E2) exceeds EncMin2L on SVHN (0.978 vs.\ 0.951) and CIFAR-100
(0.976 vs.\ 0.942), yet collapses on CIFAR-10C (0.867).
The diagnostics explain this directly: DINOv2's corruption-invariant features
($\Delta\mu\!=\!28$) render training neighbours nearly indistinguishable from
corrupted images, so nearest-neighbour proximity is not a reliable covariate-shift signal.
ResNet-50 $k$-NN inverts the pattern---strong on SVHN (0.970) and CIFAR-10C (0.936)
but poor on CIFAR-100 (0.775)---its large $\Delta\mu$ conferring covariate sensitivity
at the cost of fine-grained semantic discrimination.
No single-encoder $k$-NN achieves $\geq\!0.94$ on all four benchmarks.

Under CIFAR-100, DINOv2 $k$-NN (E2) becomes the strongest single method on
near-OOD benchmarks (CIFAR-10: $0.930$; SVHN: $0.881$).
The same encoder that collapsed on covariate shift under CIFAR-10 leads on
semantic near-OOD under CIFAR-100, confirming that encoder utility is
shift-type-relative: E2's high $\eta^2$ is an asset when the ID space is
semantically richer and the shift is to coarser-grained classes.

\textbf{DTD and CelebA: saturation gap.}
All per-encoder models and EncMin2L achieve AUROC $\geq\!0.999$ on DTD and $1.000$ on CelebA
(FPR $0.000$), as do CLIP and DINOv2 $k$-NN.
Classifier-based methods do not: MSP $0.856$, Energy $0.809$, Mahalanobis $0.772$---a
supervised ResNet-50 trained on CIFAR-10 object classes does not reliably flag face images
because they have no equivalent class boundary to cross.
Pixel-space diffusion methods similarly fail to saturate: EigenScore $0.873$, DiffPath
$0.897$, LMD $0.834$, DM Dual Threshold $0.648$ on CelebA; EigenScore $0.756$ and
Projection Regret $0.910$ on DTD.
Modelling the ID distribution in representation space, where face and texture images
have no support, is what makes the gap disappear.

Under CIFAR-100, DTD remains fully saturated ($\geq\!0.992$) for all our models.
CelebA shows minor degradation for E2 ($0.947$) and E3 ($0.983$) normed
forks---the 100-class ID manifold marginally reduces the support gap for face
images.
Classifier degradation on CelebA intensifies further: Mahalanobis reaches $0.282$,
reinforcing that representation-space density modelling is insensitive to ID label
granularity in a way class-boundary distances are not.

\textbf{Head-to-head: Representation-space diffusion.}
\citet{jarve2025vdm} independently apply the same core hypothesis on the same
CIFAR-10 ID dataset, training a Variational Diffusion Model (VDM) on ResNet-18
penultimate features (512-d).
EncMin2L outperforms their best likelihood-based score (VDM log $p_T$) by
$+0.012$ on SVHN (0.951 vs.\ 0.939) and $+0.057$ on CIFAR-100 (0.942 vs.\ 0.885),
consistent with encoder diversity:
CLIP and DINOv2 capture semantic cues inaccessible to a single ResNet-18. They also report TKDL---a class-conditioned score requiring labels at
inference---as their best single-encoder method, reaching 0.900 on SVHN.
EncMin2L surpasses TKDL by $+0.051$ while remaining fully unsupervised, showing
that multi-encoder combination recovers gains otherwise achievable only through supervision.

Under CIFAR-100 as ID, VDM degrades to $0.777$ on CIFAR-10 as OOD and $0.755$ on SVHN;
EncMin2L maintains its advantage ($+0.115$, $+0.051$).
A single ResNet-18 encoder lacks the semantic capacity to model a 100-class ID manifold,
and the multi-encoder gap widens as ID complexity grows.

\begin{remark}
\textit{\citet{ding2025rdm} independently apply a sub-VP SDE score model on a single
pretrained encoder's representation space at ImageNet scale, validating the
same core hypothesis.
Architecturally, this is equivalent to the single-encoder models in
Table~\ref{tab:main}---a setting where EncMin2L consistently outperforms
every individual encoder across all four benchmarks.}
\end{remark}

\begin{remark}
\textit{Representation-space diffusion methods are architecturally closest to
EncMin2L---the same PF-ODE log-likelihood, the same compressed representation space.
Under CIFAR-10 ID, the gap is already explained by single-encoder scope.
Under CIFAR-100, VDM's steeper degradation confirms that multi-encoder coverage
is the variable that scales with ID complexity, not the choice of diffusion formulation.}
\end{remark}

\textbf{Head-to-head: Pixel-space diffusion.}
\citet{heng2024diffpath} propose DiffPath, which extracts a 6-dimensional statistic
from the score-function trajectory of a pixel-space DM and fits a GMM in that space.
With CIFAR-10 as ID, a CelebA-trained model applied zero-shot yields AUROC 0.910
on SVHN, 0.590 on CIFAR-100, and 0.923 on DTD.
EncMin2L outperforms on both: $+0.041$ on SVHN and $+0.352$ on CIFAR-100.
The CIFAR-100 gap is particularly large: pixel-space trajectories capture global
distribution shift but miss the fine-grained semantic structure that representation-space
log-likelihoods encode---precisely the class-conditional variance that $\eta^2$
($0.096$--$0.243$) measures and that is absent in raw pixel trajectories.
DiffPath's zero-shot universality comes at the cost of ID-specific semantic sensitivity;
EncMin2L trades universality for precise density modelling on the ID manifold.

Under CIFAR-100 as ID, DiffPath's near-OOD performance collapses to $0.483$ on
CIFAR-10 as OOD---near chance---while SVHN recovers to $0.724$.
CelebA remains at $0.887$: face images present a coarse perceptual contrast that is
relatively ID-distribution-agnostic.
The pixel-space trajectory cannot resolve the coarser-grained/finer-grained class
boundary that separates CIFAR-10 from a CIFAR-100 ID distribution.

\textbf{Head-to-head: LL+LID dual-threshold diffusion.}
\citet{kamkari2024geometric} flag OOD if log-likelihood $\log p_\theta(\mathbf{x})$
or local intrinsic dimension (LID) falls below a threshold, targeting the paradox that
low-dimensional OOD data can receive high density yet low probability mass.
Under CIFAR-10 as ID, their best DM variant achieves AUROC~0.944 on SVHN and 0.560 on CIFAR-100;
EncMin2L exceeds both by $+0.007$ and $+0.382$ respectively.
The large CIFAR-100 gap---despite the principled LL+LID combination---confirms that
pixel-space LID cannot substitute for the semantic class structure that
representation-space log-likelihoods encode.

Under CIFAR-100 as ID, DM Dual Threshold's SVHN AUROC rises to $0.945$, now exceeding
EncMin2L ($0.806$): the richer ID distribution sharpens the global domain gap for
pixel-space LL.
On CIFAR-10 as OOD it reaches only $0.791$ ($-0.101$ vs.\ EncMin2L's $0.892$):
near-OOD failure deepens as ID and OOD share an increasingly overlapping class taxonomy.

\textbf{Head-to-head: Inpainting-based diffusion.}
\citet{liu2023lmd} propose LMD, which masks an image with a checkerboard pattern,
reconstructs via DDPM inpainting, and scores OOD-ness by LPIPS perceptual distance.
Under CIFAR-10 as ID, LMD reaches 0.992 on SVHN, surpassing EncMin2L (0.951): inpainting fails visibly
on house-number images, making the perceptual gap large and discriminative.
On CIFAR-100, LMD drops to 0.607 ($-0.335$ vs.\ EncMin2L's 0.942): CIFAR-100 shares
visual structure with CIFAR-10, so DDPM reconstruction succeeds and the perceptual
score collapses---the class-conditional density structure ($\eta^2 = 0.096$--$0.243$)
that drives our detection is invisible to pixel reconstruction.

Under CIFAR-100 as ID, the asymmetry intensifies: LMD reaches $0.985$ on SVHN
(exceeding EncMin2L's $0.806$) but collapses to $0.568$ on CIFAR-10 as OOD.
Reconstruction succeeds precisely when visual structure is shared between ID and OOD,
and CIFAR-10's semantic overlap with CIFAR-100 is near-maximal.

\textbf{Head-to-head: Projection-based diffusion.}
\citet{choi2023projection} propose Projection Regret (PR), scoring OOD-ness by the
LPIPS distance between a test image and its diffusion-based projection, with recursive
projections to cancel background statistics.
Under CIFAR-10 as ID, PR achieves AUROC~0.993 on SVHN, exceeding EncMin2L by $+0.042$: perceptual distance
is highly discriminative when global domain statistics differ sharply.
On CIFAR-100, however, PR drops to 0.775 ($-0.167$ vs.\ EncMin2L's 0.942): CIFAR-100
shares background statistics with CIFAR-10, so the projection reconstructs OOD images
convincingly and the score collapses precisely where $\eta^2$ is highest.
EncMin2L sacrifices only $-0.042$ on SVHN for AUROC$\geq\!0.94$ across all benchmarks--- balancing semantic sensitivity for global-shift detection.

Under CIFAR-100 as ID, Projection Regret reaches $0.945$ on SVHN (vs.\ EncMin2L's
$0.806$) but drops to $0.577$ on CIFAR-10 as OOD and $0.910$ on DTD
(vs.\ our $1.000$).
Background-statistics cancellation is effective when the global domain gap is large;
it provides no signal when ID and OOD share the same object taxonomy.

\textbf{Head-to-head: Posterior-covariance diffusion.}
\citet{shoushtari2026eigenscore} propose EigenScore, which uses the
eigenvalue spectrum of a diffusion denoiser's posterior covariance as the OOD signal:
OOD inputs inflate the leading eigenvalues, yielding a Jacobian-free spectral detector.
Under CIFAR-10 as ID, EigenScore achieves AUROC~0.810 on SVHN and 0.880 on CIFAR-100;
EncMin2L outperforms on both by $+0.141$ and $+0.062$ respectively.
EigenScore's CIFAR-100 result (0.880) is the strongest among all pixel-space
diffusion methods surveyed, suggesting posterior covariance partially recovers semantic
structure---yet the residual $-0.062$ gap reflects class-conditional density geometry
that $\eta^2$ ($0.096$--$0.243$) quantifies and that pixel-space denoisers cannot fully access.

Under CIFAR-100 as ID, EigenScore deteriorates across all four benchmarks: CIFAR-10
$0.642$, SVHN $0.661$, DTD $0.627$, CelebA $0.427$.
Even DTD and CelebA lose the separation gap: a richer 100-class ID distribution
inflates the posterior covariance baseline, reducing spectral contrast for all OOD types.

\begin{remark}
\textit{Across all pixel-space diffusion methods---trajectory (DiffPath), inpainting
(LMD), projection (PR), LL+LID dual-threshold, and posterior covariance
(EigenScore)---a consistent asymmetry holds under both ID settings: strong on SVHN
(global domain shift), weak on semantic near-OOD.
Under CIFAR-100 as ID, three methods (DM Dual Threshold, LMD, Projection Regret)
exceed EncMin2L on SVHN ($0.945$/$0.985$/$0.945$ vs.\ $0.806$), yet all three
collapse on CIFAR-10 as OOD ($0.483$--$0.791$).
Pixel-space representations cannot access the class-conditional density geometry
($\eta^2\!=\!0.096$--$0.243$) that drives semantic discrimination in representation
space.}
\end{remark}

\begin{table*}[t]
  \caption{Leave-one-out encoder ablation for EncMin2L reporting
    AUROC (FPR@95\%TPR) on Near-OOD benchmarks. All configurations calibrated on the same held-out ID split.
    \textbf{Bold}: full combination.
    $\downarrow$: AUROC drop relative to full combination.}
  \label{tab:ablation}
  \centering
  \small
  \begin{tabular}{lccc}
    \toprule
    \textbf{Configuration} &
      \textbf{SVHN} &
      \textbf{CIFAR-100} &
      \textbf{CIFAR-10C} \\
    \midrule
    \textbf{E1+E2+E3 (full EncMin2L)}
      & \textbf{0.951 (0.301)} & \textbf{0.943 (0.257)} & \textbf{0.961 (0.212)} \\
    E2+E3 (no E1 / no CLIP)
      & 0.829 (0.494)~\color{blue}$\downarrow$0.122 \color{black} & 0.948 (0.225) & 0.961 (0.184) \\
    E1+E3 (no E2 / no DINOv2)
      & 0.958 (0.281) & 0.830 (0.512)~\color{blue}$\downarrow$0.113\color{black} & 0.972 (0.161) \\
    E1+E2 (no E3 / no ResNet)
      & 0.963 (0.236) & 0.953 (0.214) & 0.834 (0.727)~\color{blue}$\downarrow$0.127 \color{black}\\
    \bottomrule
  \end{tabular}
\end{table*}

\subsection{Encoder Contribution: Leave-One-Out Ablation}
\label{sec:ablation}

Table~\ref{tab:ablation} reports EncMin2L with each encoder removed in turn on the
three near-OOD benchmarks; DTD and CelebA are omitted as all configurations saturate
at $\geq\!0.999$.
Each drop directly confirms the specialisation the \S\ref{sec:diagnostics}
diagnostics predicted from ID data alone.

\textbf{Removing E1 (CLIP)} cuts SVHN AUROC by $-0.122$ while leaving
CIFAR-100 and CIFAR-10C stable: CLIP's low $\eta^2$ marks it as the
global-domain-shift specialist, with little contribution elsewhere.

\textbf{Removing E2 (DINOv2)} drops CIFAR-100 AUROC by $-0.113$; SVHN
and CIFAR-10C improve slightly, a complementarity effect explained by E2's
near-zero $\Delta\mu$---its corruption-invariant features were a drag on
covariate-shift benchmarks.
DINOv2's high $\eta^2\!=\!0.243$ identified this semantic specialisation
before any OOD data were observed.

\textbf{Removing E3 (ResNet-50)} causes the largest single drop:
CIFAR-10C falls by $-0.127$, while SVHN and CIFAR-100 improve slightly,
confirming that E3's large $\Delta\mu$ makes it indispensable for covariate
shift.

% ============================================================
\section{Discussion}
\label{sec:discussion}
% ============================================================

\paragraph{Encoder Generalization Advantage.}
The diagnostic pipeline applies to any pretrained encoder: compute $\eta^2$ and
$\Delta\mu$ on ID data, screen for complementary pairs via Spearman $\rho$, then
train per-encoder diffusion models only for encoders that add coverage.
A natural screening criterion is $\rho < 0.5$ between the candidate encoder and all
currently selected encoders; the remaining encoders each bring genuinely new
information.
This makes the framework a practical design-time tool for multi-encoder OOD
detector construction without ever needing OOD examples.

\paragraph{Recalibration-free Detection Advantage.}
The joint threshold $\tau$ is calibrated at FPR $\alpha\!=\!0.05$ on the ID
validation set.
Because the combined score $s(z)$ is approximately uniform on ID data
(Proposition~\ref{prop:beta}), changing $\alpha$ amounts to a direct rescaling of $\tau$:
the empirical FPR at any other $\alpha'$ can be read directly from the ID
validation CDF without recalibration.
This makes the system straightforwardly adjustable for deployment scenarios
with different false-positive budgets (e.g.\ conservative $\alpha\!=\!0.01$ for
safety-critical applications vs.\ permissive $\alpha\!=\!0.10$ for screening).

\paragraph{Limitations.}
The synthetic corruption probe ($\Delta\mu$) uses a fixed two-stage perturbation
(Gaussian blur + additive noise) to approximate covariate shift.
While this correctly ranks encoder sensitivity in our experiments, it may not
generalize to all covariate shift types (e.g.\ weather effects, sensor noise).
Users should run the probe with domain-relevant corruptions when the
expected shift type is known. The $\min(\cdot)$-gate's formal hypothesis testing guarantee (Tippett's test) requires the
per-encoder $p$-values to be independent under $H_0$.
In practice, Spearman correlations of 0.17--0.39 between encoder pairs
indicate moderate dependence. Near-OOD cases where OOD distribution partially overlaps with ID
distribution (e.g.\ CIFAR-10 ``truck'' vs.\ CIFAR-100 ``truck'' classes) remain
challenging for all density-based methods.

% ============================================================
\section{Conclusion}
\label{sec:conclusion}
% ============================================================

We presented a framework that statistically identifies the strengths and weaknesses
of pretrained encoders from ID data alone, and introduces EncMin2L---an
encoder-agnostic two-level $\min(\cdot)$-gate for combining and calibrating per-encoder
diffusion-based likelihood OOD detectors---achieving $\geq\!0.94$ AUROC across all
four shift types simultaneously and outperforming the state-of-the-art
diffusion OOD detectors at $2.3\!\times$
lower parameter cost.
$\eta^2$ and $\Delta\mu$ rank CIFAR-100 and CIFAR-10C AUROCs with Spearman
$\rho\!=\!0.771$ and $\rho\!=\!1.0$ respectively before any OOD examples are
observed, and leave-one-out ablations confirm that each encoder contributes precisely
where its diagnostic profile predicted.

% ============================================================
\section*{Acknowledgements}
\label{sec:conclusion}
% ============================================================
I would like to thank my colleagues Shaan Ul Haque and Nazal Mohamed, 
for their insightful feedback and discussions that greatly improved 
the quality of this work.

This research was supported in part through research cyberinfrastructure 
resources and services provided by the Partnership for an Advanced 
Computing Environment (PACE) at the Georgia Institute of Technology, 
Atlanta, Georgia, USA. RRID:SCR$\_$027619.

\bibliography{paper}
\bibliographystyle{abbrvnat}

\end{document}